\theoremstyle{plain}
\newtheorem{definition}{Definition}
\newtheorem{assumption}{Assumption}
\newtheorem{theorem}{Theorem}
\newtheorem{lemma}{Lemma}
\newtheorem*{remark}{Remark}
\DeclareMathOperator*{\esssup}{ess\,sup}
\newcommand{\x}{x}
\newcommand{\hats}{\widehat{s}}
\newcommand{\hatth}{\widehat{\theta}}
\newcommand{\E}{\mathbb{E}}
\newcommand{\alg}{\texttt{DSM-CUSUM}}
\begin{document}
\title{Sequential Change Point Detection via \\ Denoising Score Matching} 

\author{%
  \IEEEauthorblockN{Wenbin Zhou}
  \IEEEauthorblockA{Carnegie Mellon University\\
                    Pittsburgh, PA\\
                    Email: wenbinz2@andrew.cmu.edu}
  \and
  \IEEEauthorblockN{Liyan Xie}
  \IEEEauthorblockA{University of Minnesota\\
                    Minneapolis, MN\\
                    Email: liyanxie@umn.edu}
  \and
  \IEEEauthorblockN{Zhigang Peng}
  \IEEEauthorblockA{Georgia Institute of Technology\\
                    Atlanta, GA\\
                    Email: zpeng@gatech.edu}
  \and
  \IEEEauthorblockN{Shixiang Zhu}
  \IEEEauthorblockA{Carnegie Mellon University\\ 
                    Pittsburgh, PA\\
                    Email: shixiangzhu@cmu.edu}
}

\maketitle

\begin{abstract}
    Sequential change-point detection plays a critical role in numerous real-world applications, where timely identification of distributional shifts can greatly mitigate adverse outcomes.
    Classical methods commonly rely on parametric density assumptions of pre- and post-change distributions, limiting their effectiveness for high-dimensional, complex data streams. 
    This paper proposes a score-based CUSUM change-point detection, in which the score functions of the data distribution are estimated by injecting noise and applying denoising score matching. We consider both offline and online versions of score estimation. Through theoretical analysis, we demonstrate that denoising score matching can enhance detection power by effectively controlling the injected noise scale. Finally, we validate the practical efficacy of our method through numerical experiments on two synthetic datasets and a real-world earthquake precursor detection task, demonstrating its effectiveness in challenging scenarios.
\end{abstract}

\section{Introduction}

Sequential change point detection is a fundamental statistical problem that aims to identify the exact moment when the underlying distribution of a sequential data stream undergoes a structural change \cite{veeravalli2013quickest,siegmund1985sequential,tartakovsky2014sequential}. This capability is critical in various real-world applications where detecting and reacting to changes in real time can have significant implications. For example, change point detection is vital in financial markets to identify regime shifts \cite{barassi2020change, zhu2023sequential}, 
in manufacturing to detect system faults \cite{lai1995sequential},
and in epidemic control to detect the onset of outbreaks or shifts in infection trends \cite{zhu2022early}. 

Recent advances in sensor technology and signal processing have opened up new opportunities to uncover the intricate dynamics of change in various domains. 
For instance, in seismology, recent development and application of advanced earthquake detection techniques such as template matching and machine learning have resulted in an exponential growth in the quantity and quality of seismic records \cite{dong2023conditional, beroza2021machine}. These enriched datasets are critical for advancing our understanding of earthquake precursors \cite{pritchard2020new} -- an elusive yet transformative goal that could enable early warnings for catastrophic major earthquakes and significantly mitigate their impacts. While so far no reliable earthquake precursors have been identified \cite{picozza2021looking, conti2021critical, peng2024physical}, there is a renewed interest in studying them, largely driven by improved dense near-field geophysical observations, availability of big data and new development in machine learning methods in earthquake science \cite{dong2023conditional, beroza2021machine, mousavi2023machine}.

As data streams become increasingly complex and high-dimensional, there is a growing need for more robust and scalable change point detection frameworks \cite{liu2020unified}.
Many traditional techniques are grounded in modeling the log-likelihood of pre- and post-change distributions, which often involve parametric assumptions on density functions \cite{lai1998information, xie2023window}. While effective in certain scenarios, these assumptions can pose challenges in capturing the nuances of high-dimensional, interdependent data. This may limit their applicability in modern settings where distributional changes tend to be subtle, non-linear, and situated in complex data spaces that are not easily represented by simple parametric models \cite{li2015m, li2019scan}. 

In this paper, we propose a novel framework for sequential change point detection based on denoising score matching, a method that has demonstrated success in learning deep generative models \cite{song2020sliced,vincent2011connection}. The most salient feature of this approach is that it avoids parametric assumptions on the pre- and post-change distributions. Instead, we adopt a score network to learn the underlying distributions in a fully data-driven manner. To address the challenge of detecting subtle changes in regions of the data space where samples are scarce, we adopt a denoising strategy \cite{song2019generative}. By injecting controlled noise into the data, this strategy encourages the score network to explore underrepresented regions of the data distribution, improving its ability to detect changes. 
Furthermore, we develop a theoretical framework to quantify the tradeoff between the injected noise level and the score estimation accuracy. Our analysis reveals that careful selection of the noise level can minimize the error of the estimated distributional change and enable the score network to achieve robust detection performance.
Our framework is validated through experiments on both synthetic and real-world datasets, demonstrating its capability to detect changes in complex, high-dimensional data distributions. 

Our main contributions are summarized as follows: 
($i$) We propose a novel change point detection algorithm based on denoising score matching;
($ii$) We provide theoretical insights on the detection efficiency of the proposed method, and demonstrate the tradeoff on the level of injected noise;
($iii$) We validate the proposed method on both synthetic and real datasets, and demonstrate its superior performance against the state-of-the-art approaches.

\vspace{.05in}
\noindent{\bf Related work}.
Classical sequential change detection methods \cite{tartakovsky2014sequential,poor-hadj-QCD-book-2008,siegmund1985sequential, xie2021sequential} can be categorized into parametric and non-parametric approaches. Parametric methods estimate the parametric post-change distribution during the detection process, such as the generalized likelihood ratio (GLR) test \cite{page1954continuous, lai1998information} and window-limited tests \cite{xie2023window}. Nonparametric methods circumvent the need for explicit distributional assumptions and have been developed using techniques such as kernel density estimation \cite{liang2023quickest, sugiyama2012direct} and kernel-based distances \cite{li2015m, li2019scan}.

Recent advancements have introduced novel perspectives on change point detection. One notable development is the exact score-based detection method proposed by \cite{pmlr-v206-wu23b}, which utilizes the Hyv\"arinen score to detect changes in known unnormalized distributions, offering an alternative to the traditional log-likelihood approach. At the same time, neural network-based methods \cite{moustakides2019training, lee2023training} have broadened the applicability of change point detection to settings with unknown and complex distributions through likelihood-based CUSUM techniques. Building upon these advancements, our proposed method bridges the gap between score-based and neural network-based approaches. By effectively leveraging the flexibility of neural networks to parametrize unnormalized models, our approach achieves a robust and computationally efficient solution for detecting change points in complex, high-dimensional settings.

\section{Problem Setup}
\label{sec:setup}

Denote $\mathcal X$ as the (possibly high-dimensional) continuous sample space of the observed data. Let $p_0, p_1$ be two distinct\footnote{By ``distinct'' we mean that $p_0$ and $p_1$ must be different on a set in $\mathcal{X}$ with non-zero Lebesgue measure.} and unknown probability distributions on $\mathcal X$. Given data sequence $\{\x_t\in\mathcal X, t\in \mathbb{N}\}$, we suppose that the data-generating distribution changes from $p_0$ to $p_1$ at some unknown (but deterministic) change-point $\nu$, i.e., 
\begin{equation}
\label{eq:hypothesis}
    \begin{array}{ll}
    \x_t  \stackrel{\text{iid}}{\sim} p_0, &k = 1,2,\ldots,\nu-1, \\
    \x_t \stackrel{\text{iid}}{\sim} p_1, &k = \nu,\nu+1,\ldots
    \end{array} 
\end{equation}
The objective is to detect the unknown change-point $\nu$ as quickly as possible while controlling the false alarm rate. Such detection is usually performed by designing a stopping time $T$ on the observation sequence at which it is declared that a change has occurred \cite{xie2021sequential}.

The performance of a stopping time $T$ is measured by its \textit{average run length (ARL)} and \textit{worst-case average detection delay (WADD)}. The ARL measures the average time to false alarms and is defined as $\mathrm{ARL}:=\mathbb{E}_\infty[T]$, where $\E_\infty$ denotes the expectation under the probability measure when no change occurs. WADD is defined as 
\[
\mathrm{WADD}:= \underset{\nu \geq 1}{\operatorname{\sup}} \esssup \ \mathbb E_\nu\left[(T-\nu+1)^+| x_1,\ldots,x_{\nu-1}\right],
\]
where $\E_\nu$ is the expectation when the change-point equals $\nu$, and the essential supremum is taken over all possible change-points $\nu \geq 1$ and all possible pre-change samples $\{x_1,\ldots,x_{\nu-1}\}$. Our goal is to build a stopping time that can minimize WADD subject to a false alarm constraint $\mathrm{ARL}\geq \gamma$.

\section{Proposed Method}

We propose a novel change point detection algorithm based on denoising score matching (DSM).
We first present preliminary results on score matching. Then we describe our proposed algorithm, referred to as \alg. 

\subsection{Preliminaries}\label{sec:prelim}
We consider the scenario where the pre- and post-change density functions are either unavailable or difficult to estimate. This situation often arises, for example, in unnormalized models $p_\theta(\x) = \tilde p(\x,\theta)/Z_\theta$, where $Z_\theta$ is a normalizing constant that can be computationally intractable. Nonetheless, the distribution can still be characterized by the derivative of its log-density, $\nabla_x\log \tilde p(\x,\theta)$, known as the score function. Note that this score function does not depend on $Z_\theta$ and is often easier to estimate. Formally, we define the score and Hyv\"arinen score (abbreviated as H-score) as follows \cite{hyvarinen2005estimation}.
\begin{definition}
    \label{def:score}
    For a density function $p(\x)$, its (Stein) score function is defined as $s(\x)=\nabla_{\x} \log p(\x)$, and the 
    Hyv\"arinen score is defined as $H(\x ; s) = \operatorname{div} s(x) + \frac{1}{2} \Vert s(x)  \Vert_2^2$,
    where $\operatorname{div} s(\x) = \sum_{i = 1}^d \partial s_i(\x) / \partial \x_i$ is the divergence.
\end{definition}

We model the data distribution using a score function $s(\x;\theta)$ parameterized by $\theta$, which can be learned by minimizing the following Fisher divergence between the model and the true distribution $p(\x)$:
\begin{equation}
    \label{eq:fisher}
    \min_\theta D_{F} := \mathbb{E}_{\x \sim p} \| s(\x;\theta) - \nabla_{\x} \log p(\x) \|_2^2.
\end{equation}
This problem can be solved in practice using a family of methods known as score matching, which defines objectives that can be directly estimated from datasets sampled from $p$, without requiring the ground-truth data score \cite{hyvarinen2005estimation, song2019generative, song2020score}.

Such estimated score functions can be inaccurate in low-density regions due to insufficient training data for computing the score-matching objective. Noise perturbation mitigates this issue by adding noise to the data and training score-based models on the perturbed data \cite{vincent2011connection, song2019generative}. Given a transition kernel $\mathcal{K} (\x' | \x)$ that represents the noise injection process, the perturbed score model $s'(\x ; \theta)$ is learned by minimizing the following denoising score matching objective: 
\begin{equation}
    \label{eq:denoise}
    \mathbb{E}_{\x \sim p} \mathbb{E}_{\x' \sim \mathcal{K} (\cdot | \x) }  \left\| s'(\x' ; \theta) - \nabla_{\x'} \log \mathcal{K} (\x' | \x) \right\|_2^2.
\end{equation}
We choose the widely adopted Gaussian kernel $\mathcal{K} (\x' | \x) = \mathcal{N}(\x'; \x, \sigma^2 \mathbf{I})$, which corresponds to injecting Gaussian noise $\mathcal{N}(0, \sigma^2 \mathbf{I})$ to the data $\x$. This choice simplifies the expression for $\nabla_{\x'} \log \mathcal{K} (\x' | \x)=- ( \x' - \x ) / \sigma^2$ within \eqref{eq:denoise}. As shown in \cite{vincent2011connection}, the minimizer of \eqref{eq:denoise} converges to the true perturbed data score defined by the transition kernel.

\subsection{Proposed Method: Denoising Score Matching CUSUM}
\label{sec:alg}

Our algorithm contains two steps: \textit{training} and \textit{detection}. The training step uses the available reference data to estimate the unknown score function, while the detection steps uses the estimated score to calculate the online detection statistics.

We start with the case that reference datasets $\mathcal{D}_{\rm 0}$ and $\mathcal{D}_{\rm 1}$ are available and are {i.i.d.} sampled from pre- and post-change distributions, respectively. 
Following \eqref{eq:denoise}, we train two score models, both parameterized as $s(\x ; \theta)$ using neural networks represented by $\theta^{(i)}$ on $\mathcal{D}_i$ for $i=0,1$. We refer to this process as {\it offline score estimate}. The model parameter $\hatth^{(i)}$ is learned by minimizing the following denoising score matching objective computed on training data:
\begin{equation}
    \label{eq:train}
    \hatth^{(i)} \!:=\! \arg\min_{{\theta}} \!\! \sum_{\x_j \in \mathcal{D}_i}
    \sum_{k = 1}^K
    \left\| s\left(\x + \epsilon_{jk} ; {\theta} \right) +
    \frac{\epsilon_{jk}}{\sigma^2} \right\|_2^2,
    i = 0, 1.
\end{equation}
Here $K \geq 1$ is a hyperparameter that represents the number of repeated noise injections, and the injected noise $\epsilon_{jk}\overset{\text{iid}}{\sim}\mathcal{N}(0, \sigma^2 \mathbf{I})$. For simplicity we denote the resulted score models as $\hats_0(\x) = s(\x;\hatth^{(0)})$ and $\hats_1(\x) = s(\x;\hatth^{(1)})$.

Next, in the {detection phase}, for each time step $t$, we compute the difference of Hyvarinen scores of $\hats_0$ and $\hats_1$ evaluated at the data point $\x_t$, and cumulate it into a CUSUM type detection statistics:
\begin{equation}
    \label{eq:statistics}
    \mathcal S_{t} = \mathcal S_{t - 1}^+ + \Delta(\x_t), \,
    \Delta(\x_t) = H(\x_t ; \hats_0) - H(\x_t ; \hats_1), t\geq 1, 
\end{equation}
where $\mathcal S_{t - 1}^+ =\max\{\mathcal S_{t - 1},0\}$ and $\mathcal S_0=0$. Note that the calculation of the detection statistics relies solely on the estimated score function and does not depend on the full density function.

To perform the detection, we raise an alarm when the statistics $\mathcal S_t$ exceeds a pre-specified threshold $\tau$. Here $\tau$ is chosen to ensure that the average run length satisfies the desired lower bound, thereby controlling the false alarm rate. Formally, the stopping time for the detection can be defined as follows:
\begin{equation}
    \label{eq:stopping-time}
    T = \inf \left\{ t: \mathcal S_t \geq \tau \right\},
\end{equation}

We further extend our study to the general case where {\it no} reference data for the post-change distribution is provided beforehand, requiring the post-change score estimation in an online manner, which we call {\it online score estimate}. The proposed method can be easily adapted to such setting as follows: During the \textit{training phase}, we only learn the pre-change score model $\hat s_0$ using the reference data $\mathcal D_0$ via \eqref{eq:train}. 
During the {\it detection phase}, the post-change score model is estimated online using the most recent $w$ samples, where $w$ is the pre-defined window size. We initialize $\mathcal{S}_0 = \cdots = \mathcal{S}_w = 0$ and $\hatth_{0}=\cdots=\hatth_{w} = \widehat\theta^{(0)}$ (the pre-change score model parameter). For $t = w+1,\ldots$, we update the post-change score model $s(\x;\widehat{\theta}_t)$ by Gradient descent on the learning objective:
\begin{align}
    \label{eq:update}
    \hatth_t & \leftarrow \hatth_{t - 1} - \eta \nabla_{\hatth_{t - 1}}\mathcal{L}_{t}(\hatth_{t - 1}); \\
    \mathcal{L}_{t}(\theta) & \coloneqq
    \sum_{j = t - w + 1}^t
    \sum_{k = 1}^K
    \left\| s\left(\x_j + \epsilon_{jk} ; \theta \right) +
    \frac{\epsilon_{jk}}{\sigma^2} \right\|_2^2, \label{eq:w-loss}
\end{align}
where $\eta$ is the learning rate, and $\epsilon_{jk}\overset{iid}{\sim}\mathcal{N}(0, \sigma^2 \mathbf{I})$, and we use the subscripted notation $\widehat{\theta}_t$ to denote that the score model is being progressively updated. We note that \eqref{eq:update} can also be iterated for multiple gradient steps instead of a single step to improve the learning outcome, with the number of iterations typically determined by computational constraints.  

We update the detection statistic as defined in \eqref{eq:statistics} using the last updated score model $\hat s_1(x)=s(\x; \widehat{\theta}_{t-1})$, and the stopping time defined the same as \eqref{eq:stopping-time}. The detection procedure is summarized in Algorithm~\ref{alg:online-model}.

\setlength{\textfloatsep}{0.05in}
\begin{algorithm}[t!]
\caption{\alg~(Online)}
\label{alg:online-model}
\begin{algorithmic}[1]
    \REQUIRE Reference dataset
    $\mathcal{D}_0$, threshold $\tau$, window size $w$.
    \STATE Initialize $\nu = \infty$, $\mathcal{S}_0 = \ldots = \mathcal{S}_w = 0$, $t = w$.
    \STATE Train model $\hat s_0$ on dataset $\mathcal{D}_0$ and initialize $\hat s_1$ as $\hat s_0$.
    \WHILE{$S_t \leq \tau$}
        \STATE $t \leftarrow t + 1, \nu \leftarrow t$.
        \STATE Update post-change score model $\hat s_1$ based on the most recent $w$-length observations with \eqref{eq:update}.
        \STATE Compute test statistics $\mathcal{S}_t$ as in \eqref{eq:statistics}.
    \ENDWHILE
    \RETURN Detected change point $\nu$.
\end{algorithmic}
\end{algorithm}

\section{Theoretical Analysis}
\label{sec:theory}

This section provides the theoretical guarantees of the proposed algorithm in terms of worst-case average detection delay. We show that the injected noise level affects both the score model's estimation error and the detection efficiency, which provides insights into the trade-off involved in selecting the optimal noise level. The detailed proofs can be found in Appendix~\ref{app:proofs}.

Recall $\hats_0(\x)$ and $\hats_1(\x)$ are the trained pre- and post-change score model through denoising score matching, and $p_0,p_1$ are the pre- and post-change data distributions. Denote $p_{i, \sigma}(\x)=\int_{\mathcal X}p_i(y)\mathcal{N}(\x; y, \sigma^2 \mathbf{I})dy$, $i=0,1$ as the perturbed distribution after Gaussian noise injection.
We make the following assumption on the \textit{estimation error} $\epsilon_{\rm est}(\sigma)$ and the \textit{perturbation error} $\epsilon_{\rm pert}(\sigma)$. These constants can be explicitly derived (which may depend on $w$, $t$ and $K$) under certain conditions, see for example \cite{block2020generative}.
\begin{assumption}
    \label{ass:error}
    There exist constants $\delta\in[0,1]$, $\epsilon_{\rm est}(\sigma)\geq 0$, and $\epsilon_{\rm pert}(\sigma)\geq 0$ such that with probability at least $1 - \delta$, for $i = 0, 1$, the following conditions hold:
    \begin{align*}
        \mathbb{E}_{\x \sim p_{1}} \left\| \hats_i(\x) - \nabla \log p_{i, \sigma}( \x) \right\|_2 & \leq \epsilon_{\rm est}(\sigma), \\
        \mathbb{E}_{\x \sim p_{1}} \left\| \nabla \log p_{i, \sigma}( \x) - \nabla \log p_i( \x) \right\|_2 & \leq \epsilon_{\rm pert}(\sigma),
    \end{align*}
     where the randomness arises from the estimation of $\hats_0,\hats_1$ due to the randomness in training samples.
\end{assumption} 

\begin{assumption}
    \label{ass}
    Under Assumption~\ref{ass:error},
    we assume there exists some constants $\epsilon_{\rm div} \geq 0$
    and $C_1, C_2 \geq 1$, such that for $i = 0, 1$, the following conditions hold concurrently with Assumption~\ref{ass:error},
    \begin{align*}
        \operatorname{Var}_{\x \sim p_{1}} \Vert \widehat{s}_{i}(\x) - \nabla \log p_{i, \sigma}(\x) \Vert_2 & \leq C_1 \cdot \epsilon^2_{\rm est}(\sigma). \\
        \operatorname{Var}_{\x \sim p_{1}} \Vert \nabla \log p_{i, \sigma}(\x )  - \nabla \log p_{i}(\x) \Vert_2 & \leq C_2 \cdot \epsilon^2_{\rm pert}(\sigma). \\
        \mathbb{E}_{\x \sim p_1} \left| \mathrm{div} \widehat{s}_i (\x) - \mathrm{div} \nabla \log p_{i}(\x) \right| & \leq \epsilon_{\rm div}.
    \end{align*}
    The first and second-order moments of all score functions are jointly upper bounded by some $M < + \infty$.
\end{assumption}
The first two inequalities ensure that the errors exhibit bounded variation at a rate similar to their first-order moment (\textit{i.e.}, a finite coefficient of variance), while the last inequality requires the divergence of the score model to approximate the ground truth effectively. These conditions are satisfied when the model is well-specified and well-trained. 

\begin{lemma}
    \label{lem:error}
    Under Assumption~\ref{ass:error} and~\ref{ass}, define the distribution change error as
    \begin{align*}
        & \epsilon(\sigma) \coloneqq 2 M \left[ \epsilon_{\rm est}(\sigma) + \epsilon_{\rm pert}(\sigma) \right] + \epsilon_{\rm div} \\
        &\hspace{20pt} + 4M (M + 1) \cdot \max\left\{ C_1, C_2\right\} \cdot \left[ \epsilon^2_{\rm est}(\sigma) + \epsilon^2_{\rm pert}(\sigma) \right],
    \end{align*}    
    then with probability greater than $1 - \delta$,
    \begin{align*}
        \label{eq:stat-err}
        & \left| \mathbb{E}_{\x \sim p_1} [\Delta(\x)] - D_F(p_1 \| p_0) \right| \leq \epsilon(\sigma),
    \end{align*}
    where $D_F(p_1\| p_0)=\mathbb E_{\x\sim p_1} \|\nabla_{\x}\log p_1(\x)-\nabla_{\x}\log p_0(\x)\|_2^2$ is the fisher divergence.
\end{lemma}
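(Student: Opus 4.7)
The plan is to apply Hyv\"arinen's integration-by-parts identity to rewrite $\mathbb{E}_{p_1}[H(\x;s)]$ as a squared Fisher-type distance and then propagate the estimation and perturbation errors through a triangle-type decomposition that passes through the noise-smoothed score $\nabla\log p_{i,\sigma}$. Concretely, under $p_1$ the Hyv\"arinen variational identity gives
\begin{equation*}
\mathbb{E}_{p_1}[H(\x;s)] = \tfrac{1}{2}\mathbb{E}_{p_1}\|s-\nabla\log p_1\|_2^2 - \tfrac{1}{2}\mathbb{E}_{p_1}\|\nabla\log p_1\|_2^2,
\end{equation*}
obtained from $\mathbb{E}_{p_1}[\operatorname{div} s] = -\mathbb{E}_{p_1}\langle s,\nabla\log p_1\rangle$ by integration by parts. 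Applying this to $\hats_0$ and $\hats_1$ and subtracting reduces the task to comparing $\mathbb{E}_{p_1}\|\hats_i - \nabla\log p_1\|_2^2$ with the ideal quantity $\mathbb{E}_{p_1}\|\nabla\log p_i - \nabla\log p_1\|_2^2$, which equals $D_F(p_1\|p_0)$ for $i=0$ and vanishes for $i=1$. Slack from applying the integration-by-parts step to the trained score $\hats_i$ (which need not be the gradient of a smooth log-density) is absorbed by the divergence bound $\epsilon_{\rm div}$ of Assumption~\ref{ass}.

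Next, I would decompose $\hats_i - \nabla\log p_i = (\hats_i - \nabla\log p_{i,\sigma}) + (\nabla\log p_{i,\sigma} - \nabla\log p_i)$ and expand $\|u+v\|_2^2 = \|u\|_2^2 + 2\langle u,v\rangle + \|v\|_2^2$ to isolate $D_F(p_1\|p_0)$ from two residual families. The linear cross terms $|\mathbb{E}_{p_1}\langle\hats_i-\nabla\log p_i,\ \nabla\log p_i-\nabla\log p_1\rangle|$ are bounded by Cauchy--Schwarz together with the second-moment bound $M$ on the true scores and the first-moment bounds of Assumption~\ref{ass:error}, yielding the linear $2M(\epsilon_{\rm est}(\sigma)+\epsilon_{\rm pert}(\sigma))$ contribution. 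The purely quadratic residuals $\mathbb{E}_{p_1}\|\hats_i-\nabla\log p_i\|_2^2$ are handled via $\|a+b\|_2^2 \le 2\|a\|_2^2 + 2\|b\|_2^2$ followed by the moment-decomposition identity $\mathbb{E}[Y^2] = (\mathbb{E} Y)^2 + \operatorname{Var}(Y)$ applied with $Y = \|\hats_i - \nabla\log p_{i,\sigma}\|$ and $Y = \|\nabla\log p_{i,\sigma} - \nabla\log p_i\|$; Assumption~\ref{ass} then contributes the $\max\{C_1,C_2\}(\epsilon_{\rm est}^2(\sigma) + \epsilon_{\rm pert}^2(\sigma))$ term. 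Summing across $i=0,1$ and using $M, C_i \ge 1$ to absorb subleading constants recovers the stated $\epsilon(\sigma)$.

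The main obstacle is bookkeeping: at each step one must decide whether to use Cauchy--Schwarz on norms (which introduces square roots of second moments) or to separate first moment from variance via $\mathbb{E}[Y^2] = (\mathbb{E} Y)^2 + \operatorname{Var}(Y)$, so that the estimation and perturbation errors land cleanly in the form prescribed by $\epsilon(\sigma)$ rather than in a mismatched combination. A secondary technical subtlety is justifying integration by parts for the trained neural score $\hats_i$; this is precisely why the divergence-approximation bound $\epsilon_{\rm div}$ appears in Assumption~\ref{ass}, since it controls $|\mathbb{E}_{p_1}[\operatorname{div}\hats_i - \operatorname{div}\nabla\log p_i]|$ and absorbs the residual from applying IBP to a score that is not literally a gradient of a smooth log-density.
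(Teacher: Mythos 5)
Your route is genuinely different from the paper's. The paper never applies integration by parts to the trained scores $\hats_i$. Instead it writes $\mathbb{E}_1[\Delta(\x)]$ as a five-term telescoping sum of Hyv\"arinen-score differences, passing through the intermediate Hyv\"arinen scores at $p_{i,\sigma}$ and at $p_i$; the base term $\mathbb{E}_1[H(\x;p_0)-H(\x;p_1)]=D_F(p_1\|p_0)$ is the only place IBP is used (Lemma~\ref{lem:h-diff}), and it is applied only to true densities. For the four residual terms, the quadratic part $\tfrac12(\|s^a\|^2-\|s^b\|^2)$ is controlled by the dedicated Lemma~\ref{lem:tech} (inner-product factorization, then Cauchy--Schwarz on both the product of means and the covariance), while the divergence part is controlled directly by the $\epsilon_{\rm div}$ assumption with no IBP at all. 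Your plan inverts this: you IBP first, converting all divergence terms into $\tfrac12\mathbb{E}_{p_1}\|s-\nabla\log p_1\|^2$, then do the $p_{i,\sigma}$ two-step decomposition inside the squared norm. That is a cleaner conceptual picture (everything becomes a distance comparison) and would also work, but it silently changes the bookkeeping.

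The one substantive gap is your treatment of the IBP residual for $\hats_i$. You say $\epsilon_{\rm div}$ ``absorbs the residual,'' but that is not quite the right object. Writing $R(s)\coloneqq\mathbb{E}_{p_1}[\operatorname{div} s]+\mathbb{E}_{p_1}\langle s,\nabla\log p_1\rangle$, the identity you invoke is $R(s)=0$, which you may assume for the true $\nabla\log p_i$ but not for $\hats_i$. Since $R(\nabla\log p_i)=0$,
\begin{align*}
R(\hats_i) &= R\bigl(\hats_i-\nabla\log p_i\bigr) \\
&= \mathbb{E}_{p_1}\bigl[\operatorname{div}\hats_i-\operatorname{div}\nabla\log p_i\bigr]
  + \mathbb{E}_{p_1}\bigl\langle \hats_i-\nabla\log p_i,\;\nabla\log p_1\bigr\rangle,
\end{align*}
so the residual is $\epsilon_{\rm div}$ \emph{plus} an additional cross term that itself requires Cauchy--Schwarz against $\nabla\log p_1$ and the first-moment/variance bounds from Assumptions~\ref{ass:error}--\ref{ass}. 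This extra cross term is precisely what the paper's direct decomposition produces without any IBP step, so your approach is not simpler in the end---it just relocates the bookkeeping. If you make $R(\hats_i)$ explicit and bound it as above, the rest of your outline (expanding $\hats_i-\nabla\log p_1$ through $\nabla\log p_i$ and $\nabla\log p_{i,\sigma}$, splitting cross terms via Cauchy--Schwarz and quadratic terms via $\mathbb{E}[Y^2]=(\mathbb{E} Y)^2+\operatorname{Var}(Y)$) closes and delivers an $\epsilon(\sigma)$-type bound. One smaller point: as stated you only sketched an upper bound on $\mathbb{E}_1[\Delta(\x)]-D_F$; the lemma is two-sided, so the symmetric argument with signs flipped (or working with absolute values throughout) is also needed---the paper's proof has the same omission.
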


Lemma~\ref{lem:error} establishes that the statistic computed using the plug-in estimate approximates the ground-truth Fisher divergence $D_F(p_1\| p_0)$ with an error bound of $\epsilon(\sigma)$, which scales proportionally to $\epsilon_{\rm est}(\sigma)$ and $\epsilon_{\rm pert}(\sigma)$. 
This observation paves the way for deriving an upper bound on the WADD of the detection procedure when the post-change score function is estimated offline using reference data $\mathcal D_1$, as detailed below.

\begin{theorem}[WADD with offline score estimate]
    \label{thm:offline}
    Under the offline score estimation setting, for a given threshold $\tau$, and under Assumption~\ref{ass:error} and~\ref{ass}, the following holds 
    with probability at least $1 - \delta$,
    $$
    \mathrm{WADD} \leq \frac{\tau}{  D_{F}\left( p_1 \Vert p_0 \right) - \epsilon(\sigma) }
    + \frac{ \mathbb{E}_{\x \sim p_1}[\Delta(\x)^2]}{ \left[ D_{F}\left( p_1 \Vert p_0 \right) - \epsilon(\sigma) \right]^2},
    $$
    where $\epsilon(\sigma)$ is defined in Lemma~\ref{lem:error}.
\end{theorem}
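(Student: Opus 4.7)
The plan is to reduce $\mathrm{WADD}$ to the expected first-passage time of a random walk with positive drift, then combine Wald's identity with a classical overshoot estimate. First, I would invoke Lorden's worst-case reduction for CUSUM procedures: since the recursion $\mathcal{S}_t = \mathcal{S}_{t-1}^+ + \Delta(\x_t)$ has i.i.d.\ post-change increments, the supremum defining $\mathrm{WADD}$ is attained at $\nu = 1$, so it suffices to bound $\Epost[T]$, where every observation is drawn from $p_1$.

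Next, I would introduce the un-reflected random walk $W_t := \sum_{i=1}^t \Delta(\x_i)$ and its first-passage time $\tilde T := \inf\{t : W_t \geq \tau\}$. A short induction on the CUSUM recursion shows $\mathcal{S}_t \geq W_t$ almost surely, hence $T \leq \tilde T$ and it suffices to bound $\Epost[\tilde T]$. On the high-probability event of measure at least $1-\delta$ supplied by Lemma~\ref{lem:error}, the drift satisfies $\mu := \E_{\x \sim p_1}[\Delta(\x)] \geq D_F(p_1 \| p_0) - \epsilon(\sigma) > 0$, while Assumption~\ref{ass} (the uniform second-moment bound $M$) gives the integrability needed to invoke Wald's identity, yielding
$$
\mu \cdot \Epost[\tilde T] = \Epost[W_{\tilde T}] = \tau + \Epost[R],
$$
where $R := W_{\tilde T} - \tau \geq 0$ is the overshoot.

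The last step would be a classical renewal-theoretic overshoot bound of the form $\Epost[R] \leq \E_{\x \sim p_1}[\Delta(\x)^2]/\mu$, which can be cited from Lorden's inequality or reproven via a size-biased sampling argument on the terminal increment $\Delta(\x_{\tilde T})$. Dividing through by $\mu$, and then replacing $\mu$ in the denominators by its lower bound $D_F(p_1 \| p_0) - \epsilon(\sigma)$, produces exactly the stated inequality.

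The hard part will be justifying the overshoot estimate cleanly, because $\Delta(\x)$ is \emph{not} guaranteed to be nonnegative; I would use $R \leq \Delta(\x_{\tilde T})^+$ on $\{R > 0\}$ together with a change-of-measure argument to relate $\Epost[\Delta(\x_{\tilde T})^+]$ to $\E_{\x \sim p_1}[\Delta(\x)^2]/\mu$. A secondary technical point is ensuring that the high-probability event from Lemma~\ref{lem:error} (where $\mu$ itself is random due to the randomness of the trained score models) propagates consistently through Wald's identity and the overshoot bound, so that the final inequality holds simultaneously on the same $1-\delta$ event.
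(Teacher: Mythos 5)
Your proposal follows the same route as the paper's own proof: reduce WADD to the expected first-passage time of the unreflected random walk $\sum_i \Delta(\x_i)$ (the paper cites \cite{xie2023window} for $\mathrm{WADD}\le\mathbb E_1[T]\le\mathbb E_1[T']$, while you derive $T\le\tilde T$ directly via $\mathcal S_t\ge W_t$; both are fine), apply Wald's identity, bound the overshoot by Lorden's excess inequality \cite{lorden1970excess}, and replace the drift $\mathbb E_1[\Delta]$ by its lower bound $D_F(p_1\|p_0)-\epsilon(\sigma)$ from Lemma~\ref{lem:error} on the $1-\delta$ event. The ``hard part'' you flag about signed increments is already resolved: Lorden's inequality gives $\sup_{\tau\ge 0}\mathbb E_1[\mathcal S'_{T'}-\tau]\le \mathbb E_1[(\Delta^+)^2]/\mathbb E_1[\Delta]\le \mathbb E_1[\Delta^2]/\mathbb E_1[\Delta]$ for any increment distribution with positive mean, so no separate size-biasing argument is needed.
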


Theorem~\ref{thm:offline} shows that a larger error term $\epsilon(\sigma)$ results in a worse WADD upper bound. Note that when $\epsilon_{\rm est}(\sigma), \epsilon_{\rm pert}(\sigma), \delta \to 0$, which corresponds to a well-trained score model, Theorem~\ref{thm:offline}'s bound converges to the standard result in \cite{pmlr-v206-wu23b}. Also, as $\tau \to \infty$, the upper bound scales approximately to be $\mathcal{O}( \tau / \left[ D_{F}\left( p_1 \Vert p_\infty \right) - \epsilon(\sigma) \right])$. 
Under the same assumptions, we can also derive the upper bound of WADD under the online setting.

\begin{theorem}[WADD with online score estimate]
    \label{thm:online}
    Under the online score estimation setting, for the window size $w$ and threshold $\tau$, and under Assumption~\ref{ass:error} and~\ref{ass}, the following holds with probability at least $1 - \delta$,
    $$
    \mathrm{WADD} \leq
    \frac{\tau + c^{1/2}\tau^{1/2} +c   + c^{1/2}d^{1/2} + d}{D_F (p_1 || p_0 ) - \epsilon(\sigma)},
    $$
    where $\epsilon(\sigma)$ is defined in Lemma~\ref{lem:error} and
    \begin{align*}
        c & = \mathbb{E}_{\x \sim p_1}[\Delta(\x)^2] / \left[ D_F (p_1 || p_0 ) - \epsilon(\sigma) \right], \\
        d & = w \left[ D_F (p_1 || p_0 ) + \epsilon(\sigma) \right].
    \end{align*}
\end{theorem}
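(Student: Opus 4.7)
The plan is to reduce the CUSUM stopping time to the first crossing time of an un-reset random walk, separate the $w$-step ``warm-up'' window---during which the online post-change score is still assimilating post-change data---from the subsequent steady-state regime in which Lemma~\ref{lem:error} applies, and then bound the crossing time via a Chebyshev-type argument. Throughout, condition on the $(1-\delta)$-probability event on which Assumptions~\ref{ass:error} and~\ref{ass} (and hence Lemma~\ref{lem:error}) hold. Write $D = D_F(p_1\|p_0)-\epsilon(\sigma)$ and $D' = D_F(p_1\|p_0)+\epsilon(\sigma)$, so that $d = wD'$. Since the detection delay is monotone non-increasing in $\mathcal{S}_{\nu-1}$, the essential supremum over pre-change histories in the definition of WADD is realized at $\mathcal{S}_{\nu-1}=0$, which I take without loss of generality. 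Define the un-reset partial sum $\tilde{\mathcal{S}}_t := \sum_{s=\nu}^{t}\Delta(x_s)$; an easy induction using $\mathcal{S}_{t-1}^+\geq\mathcal{S}_{t-1}$ gives $\mathcal{S}_t\geq\tilde{\mathcal{S}}_t$ for all $t\geq\nu$, so $T\leq\tilde T := \inf\{t\geq\nu:\tilde{\mathcal{S}}_t\geq\tau\}$ and it suffices to bound $\mathbb{E}[\tilde T-\nu+1]$.

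For $t\geq\nu+w$, the online score $\hat s_1$ at time $t$ is a deterministic function of $x_{t-w},\ldots,x_{t-1}$, which are i.i.d.\ from $p_1$; since $x_t$ is independent of $\hat s_1$, Lemma~\ref{lem:error} applies conditionally on the past and yields $\mathbb{E}[\Delta(x_t)\mid\mathcal{F}_{t-1}]\geq D$. For $t\in[\nu,\nu+w-1]$, $\hat s_1$ is trained on a pre/post-change mixture and Lemma~\ref{lem:error} does not directly apply, but the uniform moment bound from Assumption~\ref{ass} implies $|\mathbb{E}[\Delta(x_t)]|\leq D'$. Summing gives $\mathbb{E}[\tilde{\mathcal{S}}_{\nu+n-1}]\geq Dn - wD - d$ for $n\geq w$. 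The centered sequence $\Delta(x_t)-\mathbb{E}[\Delta(x_t)\mid\mathcal{F}_{t-1}]$ is a martingale difference, so orthogonality gives $\operatorname{Var}(\tilde{\mathcal{S}}_{\nu+n-1})\leq n\,\mathbb{E}[\Delta(x)^2]$. Chebyshev's inequality therefore provides $\mathbb{P}(\tilde{\mathcal{S}}_{\nu+n-1}<\tau)\leq n\,\mathbb{E}[\Delta^2]/(Dn - wD - d - \tau)^2$ whenever the denominator is positive, and I select $n^*$ as the smallest integer with $Dn^*-wD-d-\tau\geq\sqrt{n^*\,\mathbb{E}[\Delta^2]}$ so that, via $\mathbb{E}[\tilde T-\nu+1]=\sum_{n\geq 0}\mathbb{P}(\tilde T-\nu+1>n)$, the tail beyond $n^*$ contributes only an $O(1)$ correction and $\mathbb{E}[\tilde T-\nu+1]\lesssim n^*$.

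Introducing $c=\mathbb{E}[\Delta^2]/D$ recasts the defining condition for $n^*$ as a quadratic in $y := Dn^*-wD-d-\tau$, namely $y^2 = c(y + wD + d + \tau)$, whose relevant root obeys $y\leq c + \sqrt{c(\tau+d+wD)}$; hence $n^*\leq[\tau + d + wD + c + \sqrt{c(\tau+d+wD)}]/D$, and absorbing $wD\leq d$ together with the subadditivity $\sqrt{c(\tau+d)}\leq\sqrt{c\tau}+\sqrt{cd}$ produces the claimed form $(\tau+\sqrt{c\tau}+c+\sqrt{cd}+d)/D$. The principal technical obstacle is that the increments $\Delta(x_t)$ are not i.i.d.\ across $t$---the estimator $\hat s_1$ shares overlapping window data at consecutive times---so standard Wald-type identities do not apply directly. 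The martingale-difference observation circumvents this cleanly because it only requires $x_t\perp\mathcal{F}_{t-1}$ under $p_1$, not independence of the $\Delta$'s themselves, while Assumption~\ref{ass}'s bounded-coefficient-of-variance condition is what lets one replace per-step conditional second moments by the marginal $\mathbb{E}[\Delta(x)^2]$ appearing in $c$. Making the warm-up drift bound $|\mathbb{E}[\Delta(x_t)]|\leq D'$ rigorous, where Lemma~\ref{lem:error} is unavailable, is the other delicate step and is where the uniform moment bound in Assumption~\ref{ass} plays its essential role.
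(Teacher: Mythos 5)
Your reduction to the un-reset random walk $\tilde{\mathcal{S}}_t$ and the martingale-difference observation for handling the $w$-dependent increments are in the right spirit, and they mirror the filtration argument the paper uses. But your replacement of the Wald/overshoot step with a Chebyshev tail summation has a fatal gap. The Chebyshev bound you write, $\mathbb{P}(\tilde{\mathcal{S}}_{\nu+n-1}<\tau)\leq n\,\mathbb{E}[\Delta^2]/(Dn - wD - d - \tau)^2$, decays only like $C/n$ for large $n$, so $\sum_{n\geq 0}\mathbb{P}(\tilde T-\nu+1>n)$ is \emph{not} controlled by this bound --- the tail of the sum beyond $n^*$ diverges, not ``contributes an $O(1)$ correction.'' You would need at least a sub-Gaussian or sub-exponential concentration inequality on $\tilde{\mathcal{S}}_n$ (e.g., Azuma--Hoeffding with bounded increments, which you have not assumed) to make the tail sum converge. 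The paper circumvents this entirely by working with the stopped process: it proves $\mathrm{WADD}\leq \mathbb{E}_1[T']$, decomposes
\begin{equation*}
    \mathbb{E}_1[\mathcal{S}_{T'}'] = \mathbb{E}_1\Bigl[\sum_{t=w+1}^{T'+w}\Delta(x_t)\Bigr] - \mathbb{E}_1\Bigl[\sum_{t=T'+1}^{T'+w}\Delta(x_t)\Bigr],
\end{equation*}
lower-bounds the first term by $[D_F-\epsilon(\sigma)]\cdot\mathbb{E}_1[T']$ via the tower property (using that $\mathbbm{1}_{\{T'\geq t-w\}}$ is $\mathscr{F}_{t-1}$-measurable), upper-bounds the second term by $w[D_F+\epsilon(\sigma)]=d$, solves for $\mathbb{E}_1[T']$, and then bounds the overshoot $\mathbb{E}_1[\mathcal{S}_{T'}'-\tau]\leq c + c^{1/2}\tau^{1/2}+c^{1/2}d^{1/2}$ using the second-moment overshoot estimate of \cite{xie2023window} and \cite{lorden1970excess}. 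This produces exactly the stated numerator, with no tail-summation issue.

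A secondary gap is your warm-up bound $|\mathbb{E}[\Delta(x_t)]|\leq D'$ for $t\in[\nu,\nu+w-1]$. Assumption~\ref{ass} bounds first and second moments of the score functions by $M$, but $\Delta$ involves divergence terms, and the uniform moment bound alone does not yield $|\mathbb{E}[\Delta]|\leq D_F(p_1\|p_0)+\epsilon(\sigma)$ --- you get a bound in terms of $M$ and $\epsilon_{\rm div}$, not $D'$. The paper avoids ever needing a per-step warm-up bound: after the decomposition, the ``extra'' $w$ terms indexed $T'+1,\ldots,T'+w$ are handled with exactly the same $\mathscr{F}_{t-1}$-conditional application of Lemma~\ref{lem:error} used for the main sum, so the warm-up never has to be analyzed separately. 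Finally, note your final algebra produces $[\tau+d+wD+c+\sqrt{c(\tau+d+wD)}]/D$, and absorbing $wD\leq d$ yields constants that are larger than the stated bound by a factor on the $d$ terms; this is cosmetic, but the Chebyshev and warm-up issues are substantive.
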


Compared to Theorem~\ref{thm:offline}, the windowing procedure introduces additional terms in the numerator, leading to an inflated upper bound. However, these terms become negligible as $\tau \to \infty$, indicating that when the false alarm rate must be kept low, the online algorithm offers theoretical guarantees comparable to the offline version, ensuring its reliable deployment. Nonetheless, when a sufficiently large offline reference dataset is available to ensure effective training of the score network, the offline setting remains theoretically superior.

It is evident from the Theorems that an optimal choice of $\sigma$ that minimizes $\epsilon(\sigma)$ will lead to a better detection performance, and by the definition of $\epsilon(\sigma)$, it depends on both the perturbation error $\epsilon_{\rm pert}(\sigma)$ and the estimation error $\epsilon_{\rm est}(\sigma)$. The following remark suggests that injecting noise generally enhances the accuracy of the statistic in capturing the distributional change:
\begin{remark} The process of noise injection involves a tradeoff: Higher noise levels increase perturbation error $\epsilon_{\rm pert}(\sigma)$ since noise injection alters the distribution of training data. However, they also reduce score estimation error $\epsilon_{\rm est}(\sigma)$ since noise injection improves score estimation. A key empirical observation is that $\epsilon_{\rm est}$ typically decreases at a much faster rate than $\epsilon_{\rm pert}$ increases. This trend is evident in the numerical simulation results presented in Figure~\ref{fig:tradeoff} (right). As a result, even a moderate level of noise injection can effectively lower the overall error, as demonstrated in Figure~\ref{fig:tradeoff} (left).
\end{remark}
\begin{figure}[t!]
    \centering    \includegraphics[width=0.95\linewidth]{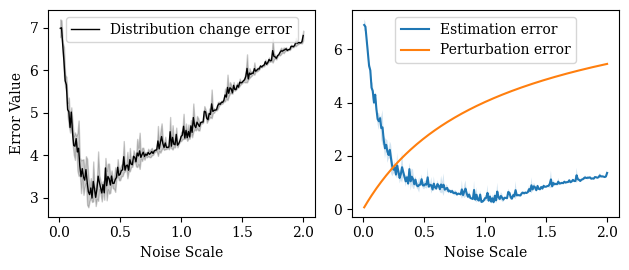}
    \vspace{-.15in}
    \caption{Simulation results on score estimation error $\epsilon_{\rm est}(\sigma)$, perturbation error $\epsilon_{\rm pert}(\sigma)$, and distribution change error $\epsilon(\sigma)$ (up to constant scaling factors) for varying values of $\sigma$. The ground-truth data is generated from a $1$D Gaussian mixture with two components.
    }
    \label{fig:tradeoff}
\end{figure}

\vspace{-0.07in}
\section{Numerical Experiments}
\label{sec:exp}

In this section, we present the numerical results of our method evaluated on both synthetic and real datasets. First, we compare the worst-case average detection delay (WADD) vs. the average run length (ARL) of our method against five baseline approaches on two synthetic datasets using offline estimation of the score model. Next, we demonstrate the application of our method to a real geophysical monitoring dataset, showcasing its ability to detect precursors to a major earthquake in an online setting. 
The score models in \alg~are implemented as single-layer feed-forward neural networks, trained for $2,000$ epochs until convergence with an injected noise scale of $1$. For the two synthetic data experiments and the real data experiment, the hidden dimensions are set to $2,048$, $512$, and $512$, respectively. In the real data experiment, a window size of $10$ is used for online score estimation.

\subsection{Synthetic Data Results}
\label{sec:syn}

\begin{figure}
    \centering
        \includegraphics[width=0.49\linewidth]{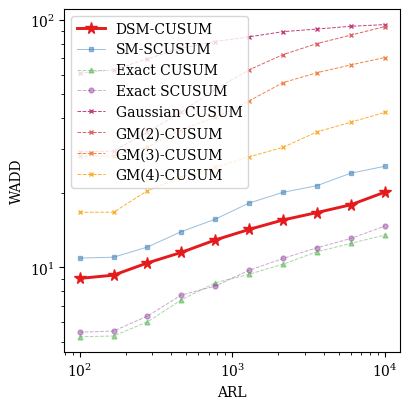}
        \includegraphics[width=0.49\linewidth]{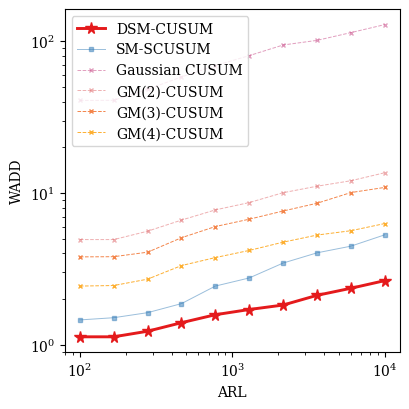}
    \vspace{-.3in}
    \caption{Comparison of WADD vs ARL on baseline methods for two synthetic datasets. Left: $2$D data by Gaussian mixtures. Right: $10$D data by deep nets.
    }
    \label{fig:synthetic}
\end{figure}

We construct two synthetic datasets with complex pre- and post-change distributions that cannot be easily captured by standard parametric models.
The first dataset is a $2$D ring-shaped distribution generated using Gaussian mixture models, while the second is a $10$D dataset obtained by applying nonlinear transformations, specified by random neural networks, to $4$D Gaussian data. 
We build the reference datasets in both scenarios to consist of $1,000$ datapoints.
Details of the data generation process are provided in Appendix~\ref{app:data}.

We evaluate our proposed \alg~under the offline score estimation setting, comparing to four baseline methods, including ($i$) CUSUM fitted with Gaussian mixture models with $n$ components (GM($n$)-CUSUM), ($ii$) an ablation model employing vanilla score matching (SM-SCUSUM), ($iii$) the Exact CUSUM \cite{page1954continuous}, and ($iv$) the exact Score CUSUM (Exact SCUSUM) \cite{pmlr-v206-wu23b}. We note that Exact CUSUM and Exact SCUSUM cannot be readily applied to the second synthetic dataset as the density function of the data, transformed by nonlinear mappings, is inaccessible. 
Detailed configurations are described in Appendix~\ref{app:baselines}.

Figure~\ref{fig:synthetic} presents the WADD vs. ARL tradeoff curves for all models. 
In both experiments, the proposed \alg~significantly outperforms other baselines, with its curves positioned substantially lower on the plot, making it the closest to those of the two oracle models. This underscores the robust modeling capabilities of unnormalized score-based models driven by deep neural networks, resulting in superior performance in detection tasks.
Lastly, note that the proposed \alg~outperforms the ablation model (SM-SCUSUM), despite both using the same neural network architecture. 
This performance gap also widens as the data distribution becomes more intricate. 
This further emphasizes the benefits of training with denoising score matching for handling more complex change point detection tasks.

\subsection{Real Data Results: Earthquake Precursor Detection}
\label{sec:real}

Earthquake precursor detection \cite{conti2021critical, picozza2021looking} focuses on identifying moments when the distribution of certain geophysical signals undergoes a shift, known as a ``precursor''. This shift provides critical insights for predicting the potential occurrence of major earthquakes. We evaluate the performance of our method on such a real-world dataset consisting of hourly high-resolution geophysical signals monitored prior to a moderate-size earthquake that occurred in China, 2014.

\begin{figure}[ht!]
    \centering
    \includegraphics[width=\linewidth]{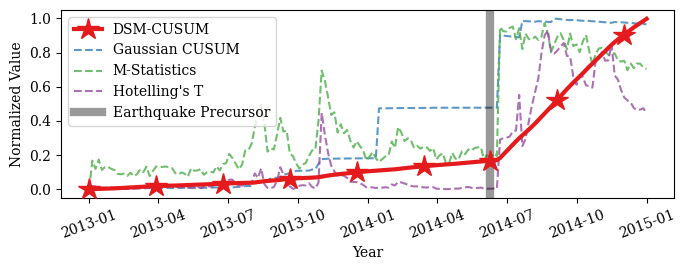}
    \vspace{-0.3in}
    \caption{
    Geophysical signal datasets: comparison of our method (red star) with three baseline methods (dashed lines). All values are normalized to the range $[0, 1]$ for ease of comparison. Domain experts identify the true precursor signals as emerging around early June 2014.
    }
    \label{fig:real}
\end{figure}

The dataset includes four types of signals from the three monitoring stations closest to the earthquake's epicenter over a two-year period. 
The signals include mid-layer water temperature, shallow-layer water temperature, static water level, and dynamic water level. After removing seasonal patterns, anomalies, and missing values, we use the first half of the trajectory as the reference dataset, and compare our algorithm under online score estimation with three baseline methods, including Hotelling's $T^2$, M-statistics \cite{li2015m}, and Gaussian CUSUM.
Baseline methods are detailed in Appendix~\ref{app:baselines}.

Figure~\ref{fig:real} illustrates the estimated statistics for all methods applied to the geophysical signal data stream. 
We observe that most baseline methods exhibit pronounced fluctuations between July 2013 and January 2014, which could lead to false alarms under lower threshold settings.
In contrast, our method demonstrates consistently low and stable statistics throughout this period. Furthermore, it accurately identifies the true precursor signal around mid-June 2014, which has been validated by domain experts. 
These results underscore the superior accuracy and reliability of our method in detecting earthquake precursors compared to the baseline approaches.

\section{Conclusion}

This paper addresses the key challenge of sequential change point detection where the pre- and post-change distributions are unknown. Our proposed method leverages the denoising score-matching objective to inject noise into the data and learn the score function of the data distribution, which is then used to construct a Hyv\"arinen score-based CUSUM statistic. Empirically, we demonstrate that injecting a small amount of noise into the data improves the score model’s ability to estimate the score function more accurately, which, in turn, enhances control over distributional changes and ultimately lowers the detection delay of the proposed statistics.
In addition, we validate our approach through numerical experiments on two synthetic datasets and one real-world geophysical monitoring dataset, showcasing its practical effectiveness in complex, potentially high-dimensional distributions.

A key limitation of our current work lies in the preliminary nature of the theoretical assumptions, which do not explicitly capture the precise relationship between noise scale and error values. We hypothesize that further investigation on score-matching learning theory could address this limitation, enabling the derivation of an optimal noise injection scale and facilitating a more systematic and effective implementation of our method.

\section*{Acknowledgement}
This research was supported by the NSF CAIG-2425888.

\newpage

\bibliographystyle{IEEEtran}
\bibliography{ref}

\newpage

\appendices

\section{Detailed Proofs for Section~\ref{sec:theory}}
\label{app:proofs}

We begin with introducing two useful technical lemmas (Lemma~\ref{lem:h-diff} and Lemma~\ref{lem:tech}).
Then we present the proof of Lemma~\ref{lem:error}, Theorem~\ref{thm:offline} and Theorem~\ref{thm:online} in sequence.
The pseudocode for the offline version of \alg~is also provided in Algorithm~\ref{alg:offline}.

For notation simplicity, we will use $\mathbb{E}_1$ instead of $\mathbb{E}_{\x \sim p_1}$ in our proofs.

\begin{lemma}
    \label{lem:h-diff}
    For two arbitrary distributions $p$ and $q$, we have
    \begin{align*}
        \mathbb{E}_{\x\sim p} [ H(\x ; q) - H(\x ; p) ] = D_F( p \Vert q ),
    \end{align*}
    where $D_F(\cdot \| \cdot)$ is the Fisher divergence between two distributions.
\end{lemma}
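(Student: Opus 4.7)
}
The plan is to reduce the claim to the standard Hyv\"arinen integration-by-parts identity applied separately to the score $\nabla \log q$ and to $\nabla \log p$ under the data distribution $p$. Write $s_p := \nabla \log p$ and $s_q := \nabla \log q$. First I would unfold the definition of the Hyv\"arinen score to obtain
\begin{equation*}
\mathbb{E}_{x\sim p}\!\left[H(x;q) - H(x;p)\right] = \mathbb{E}_{p}\!\left[\operatorname{div} s_q - \operatorname{div} s_p\right] + \tfrac{1}{2}\mathbb{E}_{p}\!\left[\|s_q\|_2^2 - \|s_p\|_2^2\right].
\end{equation*}
This splits the target expectation into two divergence terms plus a quadratic term, which is the natural decomposition to exploit.

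The second step is to apply the Hyv\"arinen integration-by-parts identity $\mathbb{E}_p[\operatorname{div} s(x)] = -\mathbb{E}_p[s(x)^\top s_p(x)]$, which follows from $\int p(x)\,\operatorname{div} s(x)\,dx = -\int s(x)^\top \nabla p(x)\,dx = -\int p(x)\,s(x)^\top s_p(x)\,dx$ after a componentwise integration by parts. Taking $s = s_q$ and $s = s_p$ respectively yields $\mathbb{E}_p[\operatorname{div} s_q] = -\mathbb{E}_p[s_q^\top s_p]$ and $\mathbb{E}_p[\operatorname{div} s_p] = -\mathbb{E}_p\|s_p\|_2^2$. Substituting back and completing the square gives $-\mathbb{E}_p[s_q^\top s_p] + \mathbb{E}_p\|s_p\|_2^2 + \tfrac{1}{2}\mathbb{E}_p\|s_q\|_2^2 - \tfrac{1}{2}\mathbb{E}_p\|s_p\|_2^2 = \tfrac{1}{2}\mathbb{E}_p\|s_p - s_q\|_2^2$, which is (up to the customary $\tfrac{1}{2}$ normalisation convention) exactly $D_F(p\Vert q)$.

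The main obstacle is ensuring that the boundary terms in the integration by parts vanish. This requires a mild regularity/decay assumption, e.g.\ that $p(x)\,s_q(x) \to 0$ as $\|x\| \to \infty$ together with integrability of $\|s_p\|_2^2$ and $\|s_q\|_2^2$ under $p$; this is the standard setting in which Hyv\"arinen score matching is formulated, and I would invoke it as an implicit regularity assumption on $p$ and $q$ (implied by the bounded second moments in Assumption~\ref{ass}). Apart from this technicality, the argument is purely algebraic and short.
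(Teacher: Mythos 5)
Your proof is correct and follows essentially the same route as the paper's: both reduce to the Hyv\"arinen integration-by-parts identity. The only substantive difference is that the paper simply cites from Hyv\"arinen (2005) the identity $D_F(p\Vert q) = \mathbb{E}_p\bigl[\tfrac{1}{2}\Vert\nabla\log p\Vert_2^2 + H(x;q)\bigr]$ and then subtracts the two instances ($q$ and $p$, using $D_F(p\Vert p)=0$), whereas you derive that identity from first principles via componentwise integration by parts. You are also more careful on two fronts. First, you make the vanishing-boundary-term condition explicit, which the paper leaves implicit. Second, and more importantly, you correctly compute the answer as $\tfrac{1}{2}\mathbb{E}_p\Vert s_p - s_q\Vert_2^2$ and flag the $\tfrac{1}{2}$ normalisation ambiguity. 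This is in fact a small inconsistency in the paper: with the paper's own definition $H(x;s)=\operatorname{div} s + \tfrac{1}{2}\Vert s\Vert_2^2$ and its stated $D_F(p\Vert q) = \mathbb{E}_p\Vert\nabla\log p - \nabla\log q\Vert_2^2$ (no $\tfrac{1}{2}$), the displayed identity in the paper's proof is off by a factor of two; the lemma holds exactly under the convention $D_F(p\Vert q)=\tfrac{1}{2}\mathbb{E}_p\Vert\nabla\log p - \nabla\log q\Vert_2^2$, which is what your derivation recovers. So your version is arguably the more careful of the two.
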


\begin{proof}
    Following \cite{hyvarinen2005estimation}, it has been shown that
    \begin{align*}
        D_F( p \Vert q ) & \coloneqq \mathbb{E}_p \left[ \| \nabla \log p(\x) - \nabla \log q(\x) \|_2^2 \right] \\
        & = \mathbb{E}_p \left[ \frac{1}{2} \| \nabla \log p(\x) \|_2^2 + H(\x ; q) \right],
    \end{align*}
    therefore
    \begin{align*}
        & \mathbb{E}_p [ H(\x ; q) - H(\x ; p) ] \\
        = & D_F(p \| q) -  \mathbb{E}_p \left[ \frac{1}{2} \| \nabla \log p(\x) \|_2^2 \right] \\
        & \underbrace{- D_F(p \| p)}_{= 0} + \mathbb{E}_p \left[ \frac{1}{2} \| \nabla \log p(\x) \|_2^2 \right] \\
        = & D_F(p \| q).
    \end{align*}
\end{proof}

\begin{lemma}
    \label{lem:tech}
    For two score function $\nabla \log p$ and $\nabla \log q$, define the following two terms \textit{w.r.t.} the random variable $\x$,
    \begin{align*}
        A(\x) & = \left\| \nabla \log p(\x) -  \nabla \log q(\x) \right\|_2, \\
        B(\x) & = \left\| \nabla \log p(\x) +  \nabla \log q(\x) \right\|_2.
    \end{align*}
    Then for a distribution $\pi$ over $\mathcal{X}$, there is
    \begin{align*}
         & \frac{1}{2} \mathbb{E}_{\pi} \left[  \left\| \nabla \log p(\x) \right\|_2^2 - \left\| \nabla \log q(\x) \right\|_2^2 \right] \\
         \leq & \frac{1}{2} \mathbb{E}_{\pi} [A(\x)] \mathbb{E}_{\pi} [B(\x)] + \frac{1}{2} \operatorname{Var}_{\pi} [A(\x)] \operatorname{Var}_{\pi} [B(\x)].
    \end{align*}
\end{lemma}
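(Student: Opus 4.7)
The plan is to reduce the statement to a pointwise estimate via a polarization-type identity and then handle the resulting expectation with a covariance decomposition. The key observation is that the left-hand side can be interpreted as the expectation of a difference of squared norms, which factors cleanly into the inner product of the sum and difference of the two score vectors.

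First I would establish a pointwise bound. For any two vectors $u,v\in\mathbb{R}^d$, the polarization identity $\|u\|_2^2-\|v\|_2^2=\langle u-v,\,u+v\rangle$ together with Cauchy--Schwarz yields $\|u\|_2^2-\|v\|_2^2\leq \|u-v\|_2\,\|u+v\|_2$. Applying this with $u=\nabla\log p(\x)$ and $v=\nabla\log q(\x)$ gives
\begin{equation*}
\tfrac{1}{2}\bigl[\|\nabla\log p(\x)\|_2^2-\|\nabla\log q(\x)\|_2^2\bigr]\leq \tfrac{1}{2}A(\x)B(\x).
\end{equation*}
Integrating against $\pi$ reduces the lemma to controlling $\mathbb{E}_\pi[A(\x)B(\x)]$.

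Next I would decompose this product expectation using the identity $\mathbb{E}_\pi[AB]=\mathbb{E}_\pi[A]\,\mathbb{E}_\pi[B]+\mathrm{Cov}_\pi(A,B)$. The first term matches exactly the first summand in the stated bound. It then remains to bound the covariance term by the product $\mathrm{Var}_\pi[A]\,\mathrm{Var}_\pi[B]$ (up to the factor $1/2$); the natural tool is Cauchy--Schwarz for covariance, which delivers $|\mathrm{Cov}_\pi(A,B)|\leq \sqrt{\mathrm{Var}_\pi[A]\,\mathrm{Var}_\pi[B]}$, which is in fact a stronger bound than what is needed whenever the two variances are bounded.

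The main subtlety I anticipate is matching the precise form of the second term in the stated bound: a naive application of Cauchy--Schwarz produces $\sqrt{\mathrm{Var}_\pi[A]\,\mathrm{Var}_\pi[B]}$ rather than the unrooted product that appears in the lemma. Reconciling this likely uses either the implicit assumption (via Assumption~\ref{ass}) that the relevant variances are bounded by $1$, so that $\sqrt{\mathrm{Var}_\pi[A]\,\mathrm{Var}_\pi[B]}\leq \mathrm{Var}_\pi[A]\,\mathrm{Var}_\pi[B]$ fails in one direction but can be absorbed with constants $C_1,C_2\geq 1$, or a direct expansion $\mathbb{E}_\pi[(A-\mathbb{E}A)(B-\mathbb{E}B)]$ followed by a coarser inequality tuned to the regime in which the lemma is later applied. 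Once this book-keeping is done, summing the two contributions yields the stated inequality and completes the proof.
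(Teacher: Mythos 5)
Your proof follows exactly the same route as the paper's: pointwise polarization identity $\|u\|_2^2-\|v\|_2^2=\langle u-v,u+v\rangle$, Cauchy--Schwarz to get $\leq A(\x)B(\x)$, then the covariance decomposition $\mathbb{E}_\pi[AB]=\mathbb{E}_\pi[A]\mathbb{E}_\pi[B]+\operatorname{Cov}_\pi(A,B)$, followed by a bound on the covariance term. Up to that point your argument is a faithful reproduction.

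The ``subtlety'' you flag at the end is in fact a real issue, not a bookkeeping nuisance, and you are right to be suspicious. Cauchy--Schwarz gives $|\operatorname{Cov}_\pi(A,B)|\leq\sqrt{\operatorname{Var}_\pi[A]\operatorname{Var}_\pi[B]}$, and the inequality $\sqrt{\operatorname{Var}_\pi[A]\operatorname{Var}_\pi[B]}\leq\operatorname{Var}_\pi[A]\operatorname{Var}_\pi[B]$ holds only when the product of variances is at least $1$, which is precisely the regime the lemma is \emph{not} meant to address (later, in Lemma~\ref{lem:error}, $A$ is an error term whose variance is bounded by $C_1\epsilon_{\rm est}^2(\sigma)$, intended to be small). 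The paper's own proof simply asserts that the last step follows from Cauchy--Schwarz, which is incorrect as written; a clean statement would keep the square root, yielding $\frac{1}{2}\mathbb{E}_\pi[A]\mathbb{E}_\pi[B]+\frac{1}{2}\sqrt{\operatorname{Var}_\pi[A]\operatorname{Var}_\pi[B]}$, and the downstream bounds in Lemma~\ref{lem:error} would then scale linearly rather than quadratically in $\epsilon_{\rm est}(\sigma)$ and $\epsilon_{\rm pert}(\sigma)$. So you did not fail to close the gap---you correctly identified that it does not close under Cauchy--Schwarz alone; the paper glosses over exactly the step you caught.
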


\begin{proof}
    Starting from the left-hand side, we have
    \allowdisplaybreaks
    \begin{align*}
         & \frac{1}{2} \mathbb{E}_{\pi} \left[  \left\| \nabla \log p(\x) \right\|_2^2 - \left\| \nabla \log q(\x) \right\|_2^2 \right] \\
         = & \frac{1}{2} \mathbb{E}_{\pi} \left[ \nabla \log p(\x)^\top \nabla \log p(\x) - \nabla \log q(\x)^\top \nabla \log q(\x)\right] \\
         = & \frac{1}{2} \mathbb{E}_{\pi} \langle \nabla \log p(\x) + \nabla \log q(\x), \nabla \log p(\x) - \nabla \log p(\x) \rangle \\
         \leq & \frac{1}{2} \mathbb{E}_{\pi} \left[ A(x) B(x) \right] \\
         \leq & \frac{1}{2} \mathbb{E}_{\pi} [ A(x) ]  \mathbb{E}_{\pi} [ B(x) ] + \frac{1}{2} \operatorname{Cov}_\pi \left( A(x), B(x) \right) \\
         \leq & \frac{1}{2} \mathbb{E}_{\pi} [A(\x)] \mathbb{E}_{\pi} [B(\x)] + \frac{1}{2} \operatorname{Var}_{\pi} [A(\x)] \operatorname{Var}_{\pi} [B(\x)].
    \end{align*}
    Note that the first and third inequality both follow from the Cauchy-Schwartz inequality.
\end{proof}

\begin{algorithm}[t]
\caption{\alg~(Offline)}
\label{alg:offline}
\begin{algorithmic}[1]
    \REQUIRE Reference datasets
    $\mathcal{D}_0, \mathcal{D}_1$, threshold $\tau$.
    \STATE Train the two score models $\hat{s}_0, \hat{s}_1$ using the reference datasets by \eqref{eq:train}. 
    \WHILE{$S_t \leq \tau$}
        \STATE $t \leftarrow t + 1, \nu \leftarrow t$.
        \STATE Compute test statistics $\mathcal{S}_t$ as in \eqref{eq:statistics}.
    \ENDWHILE
    \RETURN Detected change point $\nu$
\end{algorithmic}
\end{algorithm}

Next, we provide the complete proof for Lemma~\ref{lem:error}.

\begin{proof}[\bf Proof of Lemma~\ref{lem:error}]
    Given the definition of the proposed statistics in \eqref{eq:statistics}, we can make the following decomposition,
    \begin{align*}
        \mathbb{E}_1 [\Delta(\x)] \\
        & \hspace{-.5in} = \mathbb{E}_1 \left[ H(\x ; \widehat{p}_0) - H(\x ; p_{0, \sigma}) \right] + \mathbb{E}_1 \left[ H(\x ; p_{0, \sigma}) - H(\x ; p_{0}) \right] \\
        & \hspace{-.5in} + \mathbb{E}_1 \left[ H(\x ; p_{1, \sigma}) - H(\x ; \widehat{p}_1) \right] + \mathbb{E}_1 \left[ H(\x ; p_{1}) - H(\x ;  p_{1, \sigma}) \right] \\
        & \hspace{-.5in} + \mathbb{E}_1 \left[ H(\x ; p_{0}) - H(\x ; p_{1}) \right].
    \end{align*}
    By Assumption~\ref{ass}, we can derive that, with probability greater than $1 - \delta$,
    \begin{align*}
        \mathbb{E}_1 \| \widehat{s}_0(\x) - \nabla \log p_{0, \sigma}(\x) \|_2 & \leq \epsilon_{\rm est}(\sigma), \\
        \mathbb{E}_1 \| \widehat{s}_0(\x) + \nabla \log p_{0, \sigma}(\x) \|_2 & \leq 2M, \\
        {\rm Var}_1 \| \widehat{s}_0(\x) - \nabla \log p_{0, \sigma}(\x) \|_2 & \leq C_1 \epsilon^2_{\rm est}(\sigma),\\
        {\rm Var}_1 \| \widehat{s}_0(\x) + \nabla \log p_{0, \sigma}(\x) \|_2 & \leq 4M(M+1).
    \end{align*}
    Therefore by Lemma~\ref{lem:tech},
    \begin{align*}
        & \mathbb{E}_1 \left[ H(\x ; \widehat{p}_0) - H(\x ; p_{0, \sigma}) \right]  \\
        \leq & 
        M \epsilon_{\rm est}(\sigma) + 2M(M+1) C_1 \epsilon^2_{\rm est}(\sigma).
    \end{align*}
    Following similar procedures, we can derive the upper bound for the second, third and fourth term to obtain bounds of similar forms. For the fifth term, we can use Lemma~\ref{lem:h-diff} and get
    $$
    \mathbb{E}_1 \left[ H(\x ; p_{0}) - H(\x ; p_{1}) \right] = D_F( p_1 \Vert p_0 ).
    $$
    Combining all previous derivations, and by the definition of $\epsilon_{\rm div}$ in Assumption~\ref{ass}, we can get with probability greater than $1 - \delta$,
    \begin{align*}
        & \left| \mathbb{E}_1 [\Delta(\x)] - D_F(p_1 \| p_0) \right| \\
        \leq & 2 M \left[ \epsilon_{\rm est}(\sigma) + \epsilon_{\rm pert}(\sigma) \right] + \epsilon_{\rm div} \\
        + & 4M (M + 1) \cdot \max\left\{ C_1, C_2\right\} \cdot \left[ \epsilon^2_{\rm est}(\sigma) + \epsilon^2_{\rm pert}(\sigma) \right].
    \end{align*}
    This completes the proof.
\end{proof}

Before we state the proof of Theorem~\ref{thm:offline} and Theorem~\ref{thm:online}, we introduce the following statistics of random walk:
\begin{equation}
    \label{eq:proxy}
    \mathcal{S}_t' = \sum_{i = 1}^t{\Delta_i}, \quad \forall t = 1, \ldots, N.
\end{equation}
Then we define the corresponding stopping time as $T' := \inf \left\{ t: \mathcal S_t' \geq \tau \right\}$ given some threshold $\tau$.
It has been proved in Lemma 4 and Theorem 1 of \cite{xie2023window} that the following upper-bound relationship holds:
\begin{equation}
    \label{eq:bound-chain}
    \mathrm{WADD} \leq \mathbb{E}_1 \left[ T \right] \leq \mathbb{E}_1 \left[ T' \right],
\end{equation}
therefore, we only need to upper-bound the expected stopping time of the proxy statistics \eqref{eq:proxy} in order to upper-bound the desired quantities in Theorem~\ref{thm:offline} and Theorem~\ref{thm:online}.

\begin{proof}[\bf Proof of Theorem~\ref{thm:offline}]
    Since $\mathcal{D}_0$ and $\mathcal{D}_1$ are independent, therefore under $\x \stackrel{i.i.d.}{\sim} p_1$, the statistics $\Delta(\x
    )$ are also \textit{i.i.d.} samples. Hence using \eqref{eq:bound-chain} and by Wald's identity:
    \begin{equation}
        \label{eq:wald}
        \mathbb{E}_1[T'] = \frac{ \mathbb{E}_1 [\mathcal{S}_{T'}'] } {  \mathbb{E}_1[\Delta(\x)]}.
    \end{equation}
    The numerator can be upper-bounded as
    \begin{align*}
        \mathbb{E}_1 [\mathcal{S}_{T'}'] & = \tau + \mathbb{E}_1 [ \mathcal{S}_{T'}' - \tau ] \\
        & \leq \tau + \sup_{\tau \geq 0} \mathbb{E}_1 [ \mathcal{S}_{T'}' - \tau ] \\
        & \leq \tau + \frac{\mathbb{E}_1 [ \left( \max \left\{ \Delta(\x), 0\right\} \right)^2 ]}{\mathbb{E}_1 [\Delta] } \\
        & \leq \tau + \frac{ \left( \mathbb{E}_1 [ \Delta(\x) ] \right)^2 + \mathrm{Var}_1[\Delta(\x)]}{\mathbb{E}_1 [\Delta(\x)] },
    \end{align*}
    where the second inequality follows from \cite{lorden1970excess}.
    Plug back to \eqref{eq:wald} we get:
    $$
    \mathbb{E}_1[T'] \leq \frac{\tau}{\mathbb{E}_1[\Delta(\x)]} + \frac{\mathrm{Var}_1[\Delta(\x)]}{\left( \mathbb{E}_1 [ \Delta(\x) ] \right)^2 } + 1.
    $$
    Using the simplified notation $\epsilon(\sigma)$ introduced in the main text, we have established in Lemma~\ref{lem:error} that
    \begin{align*}
        \left| \mathbb{E}_1 [ \Delta(\x) ] - D_F (p_1 \| p_0 ) \right| \leq \epsilon(\sigma), \quad w.p. \geq 1 - \delta,
    \end{align*}
    therefore combining all previous derivations, we know that with probability greater than $1 - \delta$,
    \begin{align*}
        \mathrm{WADD} & \leq \frac{\tau}{D_{F}\left( p_1 \Vert p_0 \right) - \epsilon(\sigma) }
        + \frac{ \mathrm{Var}_1[\Delta(\x)]}{ \left[ D_{F}\left( p_1 \Vert p_0 \right) - \epsilon(\sigma) \right]^2} + 1 \\
        & \frac{\tau}{  D_{F}\left( p_1 \Vert p_0 \right) - \epsilon(\sigma) }
        + \frac{ \mathbb{E}_1[\Delta(\x)^2]}{ \left[ D_{F}\left( p_1 \Vert p_0 \right) - \epsilon(\sigma) \right]^2}.
    \end{align*}
    Since $\mathbb{E}_1[\Delta(\x)^2] < +\infty$, therefore the RHS is finite. This finishes the proof.
\end{proof}

\begin{proof}[\bf Proof of Theorem~\ref{thm:online}]
    Again, throughout our derivation, we condition on the reference dataset.
    Notice that the statistics $\Delta(\x_t)$ is now a $w$-dependent sequence \cite{xie2023window}, starting from the RHS of \eqref{eq:bound-chain},
    \begin{align*}
        \mathbb{E}_1 [\mathcal{S}_{T'}']
        & \coloneqq \mathbb{E}_1 \left[ \sum_{t = w + 1}^{T'} \Delta(\x_t) \right] \\
        & = \mathbb{E}_1 \left[ \sum_{t = w + 1}^{T' + w} \Delta(\x_t) \right] - \mathbb{E}_1 \left[ \sum_{t = T' + 1}^{T' + w} \Delta(\x_t) \right].
    \end{align*}
    Define filtration $\mathscr{F}_{t}$ as the $\sigma$-field generated by all samples up to time $t$, we can decompose the first term as,
    \begin{align*}
        & \mathbb{E}_1 \left[ \sum_{t = w + 1}^{T' + w} \Delta(\x_t) \right] \\
        = & \mathbb{E}_1 \left[ \sum_{t = w + 1}^{\infty} \Delta(\x_t) \mathbbm{1}_{\{ T' \geq t - w\}} \right] \\
        = & \mathbb{E}_1 \left[ \sum_{t = w + 1}^{\infty} \mathbb{E}_1 [ \Delta(\x_t) | \mathscr{F}_{t - 1}] \mathbbm{1}_{\{ T' \geq t - w\}} \right] \\
        \geq & \left[ D_F (p_1 || p_0 ) - \epsilon(\sigma) \right] \cdot \mathbb{E}_1 [T'], \quad w.p. \geq 1 - \delta.
    \end{align*}
The second equality uses the tower property of expectations and $\mathbbm{1}_{\{ T' \geq t - w\}}$ is $\mathscr{F}_{t - 1}$-measurable.
The inequality uses the fact that since the $\sigma$-field generated by $\Delta$ is contained in $\mathscr{F}_{t - 1}$ during online estimation, therefore, by Lemma~\ref{lem:error},
\begin{equation}
    \label{eq:online-wp-error}
    \left| \mathbb{E}_1 [\Delta(\x_t) | \mathscr{F}_{t - 1} ]
     - D_F (p_1 || p_0 ) \right| \leq \epsilon(\sigma), \quad w.p. \geq 1 - \delta.
\end{equation}

Similarly, for the second term, we can derive that,
\begin{align*}
    & \mathbb{E}_1 \left[ \sum_{t = T' + 1}^{T' + w} \Delta(\x_t) \right] \\
    = & \mathbb{E}_1 \left[ \sum_{t = w + 1}^{\infty} \Delta(\x_t) \mathbbm{1}_{\{ T' < t\}} \mathbbm{1}_{\{ T' \geq t - w\}} \right] \\
    = & \mathbb{E}_1 \left[ \sum_{t = w + 1}^{\infty} \mathbb{E}_1 [\Delta(\x_t) | \mathscr{F}_{t - 1} ] \mathbbm{1}_{\{ T' < t\}} \mathbbm{1}_{\{ T' \geq t - w\}} \right] \\
    \leq & \left[ D_F (p_1 || p_\infty ) + \epsilon(\sigma) \right] \cdot w, \quad w.p. \geq 1 - \delta.
\end{align*}
Again, for the second equality, we are using the fact that $\mathbbm{1}_{\{ T' < t\}}$ and $\mathbbm{1}_{\{ T' \geq t - w\}}$ are $\mathscr{F}_{t - 1}$-measurable. For the inequality, we use \eqref{eq:online-wp-error}.
Combining the two results and plug into the original expression we get
$$
\mathbb{E}_1 [ T'] \leq \frac{\mathbb{E}_1 [ \mathcal{S}_{T'}' - \tau] + \tau + w \left[ D_F (p_1 || p_\infty ) + \epsilon(\sigma) \right]}{D_F (p_1 || p_0 ) - \epsilon(\sigma)},
$$
where $\tau$ is the threshold value.
Denote
\begin{align*}
    c & = \mathbb{E}_1[\Delta(\x)^2] / \left[ D_F (p_1 || p_0 ) - \epsilon(\sigma) \right], \\
    d & = w \left[ D_F (p_1 || p_0 ) + \epsilon(\sigma) \right].
\end{align*}
Following similar derivations of bounding the overshoot term in Theorem 1 of \cite{xie2023window}, we can derive that 
\begin{align*}
    \mathbb{E}_1 [ \mathcal{S}_{T'}' - \tau]
    \leq c + c^{1/2} \tau^{1/2} + c^{1/2} d^{1/2},
\end{align*}
Plugging this result into the right-hand side, then with probability greater than $1 - \delta$,
$$
\mathrm{WADD} \leq
\frac{c + c^{1/2}\tau^{1/2} + \tau + c^{1/2}d^{1/2} + d}{D_F (p_1 || p_0 ) - \epsilon(\sigma)}.
$$
Note that $\mathbb{E}[\Delta(\x)^2] < \infty$, therefore the upper bound is finite. We have finished the proof.
\end{proof}

\section{Additional Experiment Details}
\label{app:exp}

In this section, we provide further details on the experimental setup described in Section~\ref{sec:exp}.

\subsection{Threshold Simulation}
The threshold \(\tau\) is typically chosen to balance the trade-off between the probabilities of false alarms and successful detection, as determined by a target ARL value. To reduce computational effort in determining thresholds for large target ARL values, we employ an efficient approximation algorithm. This algorithm leverages the fact that the stopping time \(T\) under the pre-change regime approximately follows an exponential distribution when the ARL is large. Such approximation methods are widely utilized in online change detection.

The high-level idea of the procedure is that, instead of simulating the mean of the distribution of $T:= \inf\{t: \mathcal S_t \geq \tau\}$ directly, we obtained an estimate of the mean from an estimate of the cumulative distribution function of $T$ based on $N_1$ iterations.
Specifically, in each iteration, we simulate the pre-change trajectory with $N_2$ time steps, and compute the maximum of the detection statistics at these $N_2$ time steps. These maximum values under $N_1$ iterations are then denoted as $W_{1,\text{max}}, W_{2,\text{max}}, \ldots, W_{N_1,\text{max}}$. For the desired ARL values $\gamma$, we approximate the stopping time $T$ as an exponential distribution with mean $\gamma$. Thus we have $P(W_{\text{max}} < \tau ) = P(T > N_2)  \approx e^{-N_2/\gamma}$. Thus the corresponding threshold $\tau$ can be approximated as the $e^{-N_2/\gamma}$ quantile of the set \(\{W_{1,\text{max}}, W_{2,\text{max}}, \ldots, W_{N_1,\text{max}}\}\). 
We use $N_1=200$, and $N_2=1000$. Note that we can also use more iterations and longer sequences within each iteration, which tends to improve the approximation accuracy.

\begin{figure}
    \centering
    \includegraphics[width=0.8\linewidth]{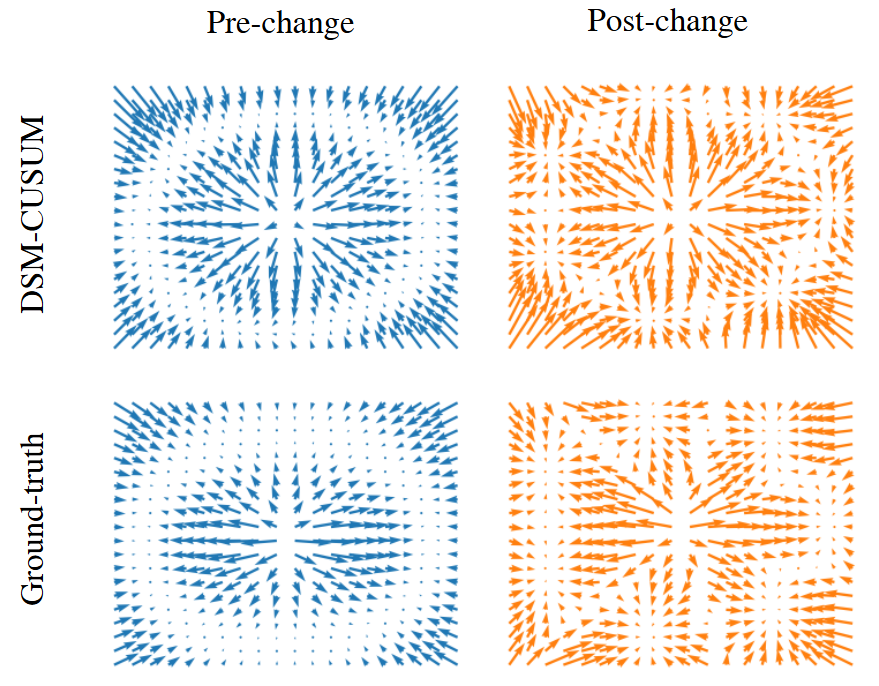}
    \caption{Comparison of the estimated score function obtained using \alg~against the ground-truth for the 2D Gaussian mixture dataset. The results demonstrate that the score models effectively capture the ground-truth score function.}
    \label{fig:score-comp}
\end{figure}

\begin{figure}
    \centering
    \includegraphics[width=0.45\linewidth]{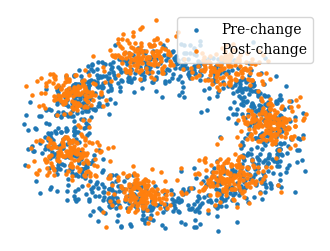}
    \includegraphics[width=0.45\linewidth]{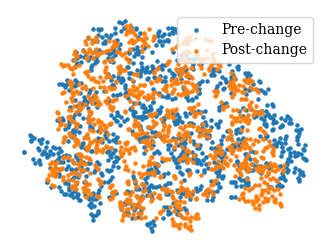}
    \caption{Visualization of the two datasets. The left column depicts $2$D Gaussian mixture data, while the right column represents $10$D neural network data (visualized using t-SNE).}
    \label{fig:dataset}
\end{figure}

\subsection{Synthetic Data}
\label{app:data}
The two synthetic datasets used in Section~\ref{sec:syn} are defined as follows: 

\paragraph{2D Gaussian mixture data}
Both pre- and post-change distributions are specified by $2$-dimensional Gaussian mixture models, where the pre-change distribution has $30$ components, and the post-change distribution has $8$ components, each of the means is evenly distributed around the circle centered at the origin with a radius of $3$. The ground-truth pre- and post-change distributions are specified as:
\begin{align*}
    p_1 & = \sum_{i = 1}^3 \frac{1}{30} \mathcal{N}\left(
    \begin{bmatrix}
        8 \cos \frac{i}{15}\pi - \frac{1}{2} \\
        8 \sin \frac{i}{15}\pi - \frac{1}{2}
    \end{bmatrix},
    \begin{bmatrix}
        1 & 0 \\
        0 & 1
    \end{bmatrix}
    \right);\\
    p_\infty & = \sum_{i = 1}^3 \frac{1}{8} \mathcal{N}\left(
    \begin{bmatrix}
        8 \cos \frac{i}{4}\pi + \frac{1}{2} \\
        8 \sin \frac{i}{4}\pi + \frac{1}{2}
    \end{bmatrix},
    \begin{bmatrix}
        1 & 0 \\
        0 & 1
    \end{bmatrix}
    \right).
\end{align*}
With the Gaussian mixture model, the ground-truth data distribution and score functions are readily available, enabling the deployment of the exact CUSUM and SCUSUM methods as described in \cite{pmlr-v206-wu23b}.
We introduce a small $\frac{1}{2}$-size offset to prevent the data from being indistinguishable for detection methods based on first and second moments.

\paragraph{10D Neural Network data}
We first generate standard Gaussian data in $4$D space, then transform the data using two sets of neural networks, whose weights and biases are randomly initialized according to standard Gaussian. To make it more challenging, we additionally deploy a mean and covariance matrix matching algorithm to make the data challenging and indistinguishable. The loss function is denoted by 
$$
\min_\theta \| \widehat{\mu}_1 - \widehat{\mu}_\infty \|_2^2 + \| \widehat{\Sigma}_1 - \widehat{\Sigma}_\infty \|_2^2,
$$
where $\theta$ denotes the parameters of the two neural networks, and $\widehat{\mu}_j, \widehat{\Sigma}_j, j = 1, \infty$ denotes the sample mean and covariance matrix of the generated pre-change and post-change data.

\subsection{Real Data \& Result Implications}

The objective of earthquake precursor detection is to detect any relevant earthquake signals before the earthquake starts (\textit{.e.g.}, three months in advance). 
The final cleaned data contains $17,521$ data points, covering a two-year time interval starting from January 01, 2013, to January 01, 2015.

Our results on the real data have two key practical implications:
($i$) This experiment retrospectively validates the judgment of domain experts and provides a robust foundation for further seismological investigations into the mechanism underlying this precursor signal, advancing earthquake prediction research.
($ii$) The experiment highlights the potential generalizability of our method for detecting precursor signals to predict earthquakes. Early detection---\textit{e.g.}, up to three months in advance---provides local governments and residents sufficient time to prepare and evacuate, significantly reducing potential losses. This underscores the method's high value in disaster mitigation and other high-stakes scenarios.

\subsection{Baseline Models}
\label{app:baselines}

Our baseline models fall in the broad category of the widely used cumulative sum (CUSUM) test statistic \cite{page1954continuous}, which is also the optimal test, defined as:
\begin{equation}
    \label{eq:cusum}
    S_t = \max_{0 \leq k \leq t} \sum_{j = k + 1}^t \log \frac{p_1 (\x_j)}{p_0(\x_j)} = S_{t - 1}^+ + \log \frac{p_1 (\x_t)}{p_0(\x_t)}.  
\end{equation}
The stopping rule is defined by $T = \inf \left\{ t: \mathcal S_t \geq \tau \right\}.$ When $p_0$ and $p_1$ are known, we refer to \eqref{eq:cusum} as the exact CUSUM test.

\paragraph{Gaussian CUSUM}
The pre-change and post-change distribution are parameterized as two multivariate Gaussian PDFs $\mathcal{N}(\mu_0, \Sigma_0)$ and $\mathcal{N}(\mu_1, \Sigma_1)$. The log-likelihood used in \eqref{eq:cusum} can be simplified as:
\begin{align*}
    \log \frac{\mathcal{N}(\mu_1, \Sigma_1)}{\mathcal{N}(\mu_0, \Sigma_0)} =  & \frac{1}{2} \log \left( \frac{|\Sigma_0|}{|\Sigma_1|} \right) + \frac{1}{2} \left[ (x - \mu_0)^\top \Sigma_0^{-1} (x - \mu_0) \right] \\
    & - \frac{1}{2} \left[ (x - \mu_1)^\top \Sigma_1^{-1} (x - \mu_1) \right].
\end{align*}
The parameters $\mu_0, \mu_1, \Sigma_0, \Sigma_1$ are fitted via standard maximum likelihood estimation (MLE).

\paragraph{Gaussian Mixture CUSUM}
The pre-change and post-change distributions are parameterized as two Gaussian mixture distributions with $n$ components,
$$
p_0 = \sum_{i=1}^n \pi_i^{(0)} \mathcal{N}\left(\mu_i^{(0)}, \Sigma_i^{(0)}\right),
\quad
p_1 = \sum_{i=1}^n \pi_i^{(1)} \mathcal{N}\left(\mu_i^{(1)}, \Sigma_i^{(1)}\right).
$$
The parameters $\pi_i^{(0)}, \mu_i^{(0)}, \Sigma_i^{(0)}, \pi_i^{(1)}, \mu_i^{(1)}, \Sigma_i^{(1)}$ are fitted using the expectation-maximization (EM) algorithm.

\section{Additional Backgrounds}

We provide more advanced background information related to our proposed method.

\vspace{0.1in}
{\bf \noindent Score-based Models.}
Denoising score matching is later generalized in SMLD \cite{song2019generative} and Score SDEs \cite{song2020score} to extend the framework of diffusion models \cite{ho2020denoising}. Specifically, when there are $n >1$ noise levels $\left\{\sigma_i \right\}_{i = 1}^n$ and allowing for the noise level as a conditional input to the parameterized score functions $s(\x, \sigma_i)$ to create a continuous ($n \to \infty$) or discrete (finite $n$) diffusion process
\begin{equation*}
    \mathbb{E}_i \mathbb{E}_{p(\x)} \mathbb{E}_{q_\sigma (\x^{(i)} | \x)}  \left[ \lambda_i \left\Vert s(\x^{(i)}, \sigma_i) - \nabla_{\x^{(i)}} \log \mathcal{K}_{\sigma_i} (\x^{(i)} | \x) \right\Vert \right].
\end{equation*}
where $\mathcal{K}_{\sigma_i}(\cdot | \cdot)$ denotes a known transition kernel parameterized by noise scale $\sigma_i$. 
$\lambda_i$ are coefficients used to scale the value of errors and can typically be chosen as $\lambda_i \propto \sigma_i^2 $
Empirically, when the training process converges well, for all $\sigma_i$, there is $s(\x, \sigma_i) \to \nabla_\x \log p_{\sigma_i}(\x)$.

In the meantime, to make score matching scalable in high-dimension settings, sliced score matching was proposed in 
\cite{song2020sliced}. Denote random projection direction $\x \in \mathbb{R}^d$, and their distribution $P_\x$. We draw $\mathbf{v}_1, \ldots, \mathbf{v}_M \stackrel{i.i.d.}{\sim} P_\mathbf{v}$ and compute,
\begin{equation*}
    \mathcal{L}(\mathbf{v}) \coloneqq \frac{1}{T} \sum_{t = 1}^T \sum_{m = 1}^M \left[ \mathbf{v}_m^\top J s (\x_t) \mathbf{v}_m + \frac{1}{2} \left( \mathbf{v}_m^\top s(\x_t) \right) ^2 \right],
\end{equation*}
where $J$ denotes the Jacobian matrix operator.
Typically $P_\mathbf{v}$ can be specified as simple distributions such as standard multivariate Gaussian distribution, and $M = 1$ suffice in practice.

\end{document}